\newcommand{\half}{\frac{1}{2}}
\newcommand{\grad}{\mathrm{grad}}
\newcommand{\Id}{\mathbb{I}}
\newcommand{\ctg}{\mathrm{tcg}}
\newcommand{\getDMAffiliation}{DeepMind, London, UK} % for arxiv
\newtheorem{lem}{Lemma}
\title{Implicit Riemannian Concave Potential Maps}
\newcommand{\printfnsymbol}[1]{%
  \textsuperscript{\@fnsymbol{#1}}%
}
\author{%
  Danilo~J.~Rezende\thanks{Equal contribution} \\
  \getDMAffiliation \\
  \texttt{danilor@deepmind.com} \\
%   \And % when formatting for NeurIPS
  \and % when formatting for arxiv
  Sébastien~Racanière\printfnsymbol{1} \\ 
  \getDMAffiliation \\
  \texttt{sracaniere@deepmind.com} \\
}
\begin{document}

\maketitle

\begin{abstract}
  We are interested in the challenging problem of modelling densities on Riemannian manifolds with a known symmetry group using normalising flows. This has many potential applications in physical sciences such as molecular dynamics and quantum simulations. In this work we combine ideas from implicit neural layers and optimal transport theory to propose a generalisation of existing work on exponential map flows, {\it Implicit Riemannian Concave Potential Maps}, IRCPMs. IRCPMs have some nice properties such as simplicity of incorporating symmetries and are less expensive than ODE-flows. We provide an initial theoretical analysis of its properties and layout sufficient conditions for stable optimisation. Finally, we illustrate the properties of IRCPMs with density estimation experiments on tori and spheres.
\end{abstract}

\section{Introduction}

This work focuses on applications of generative models where we want to approximate a $G$-invariant target density $p: \mathcal{M} \rightarrow \mathbb{R}^+$,  $p(x) \propto e^{-u(x)}$, with support on a prescribed compact Riemannian manifold $\mathcal{M}$ and a known isometry group $G$.
This setting is important to many applications of ML to physical systems such as molecular systems \cite{kohler2020equivariant, wirnsberger2020targeted, noe2019boltzmann} and quantum field theory simulations \cite{kanwar2020equivariant, boyda2021sampling}.

To achieve this, we aim to build generative models with support on $\mathcal{M}$ whose density is $G$-invariant by construction (as opposed to learning an approximate $G$-invariance). We focus on a family of models termed normalising flows. These are generative models whose density $q_{\theta} = f_{\theta \#} \pi$ is defined via the push-forward of a base density $\pi: \mathcal{M} \rightarrow \mathbb{R}^+$ through a parametric diffeomorphism $f_{\theta}: \mathcal{M} \rightarrow \mathcal{M}$ with parameters $\theta \in \mathbb{R}^n$ (usually parametrised by a neural network), \cite{papamakarios2019normalizing}. Normalising flows can be used to construct $G$-invariant densities when the base density $\pi$ is $G$-invariant (e.g. the Haar measure) and the diffeomorphism $f$ is $G$-equivariant. This mechanism has been used to build invariant models on Euclidean manifolds \cite{rezende2019equivariant, papamakarios2019normalizing, kohler2020equivariant}, Lie groups (U$(N)^d$, SU$(N)^d$) \cite{kanwar2020equivariant, boyda2021sampling} and more general manifolds \cite{katsman2021equivariant}.

We turn our attention to normalising flows built from the gradients of a scalar potential $\psi^c: \mathcal{M} \rightarrow \mathbb{R}$. In the Euclidean space, this family of flows (referred to as {\it convex potential flows}, CPFs), where a diffeomorphism $f: \mathcal{M} \rightarrow \mathcal{M}$ is defined via the gradients of a convex potential, $f_{\theta}(x) = \nabla_x \psi_{\theta}^c(x)$\footnote{Convexity of the smooth function $\psi^c$ on a convex support is a sufficient condition for the map $f$ to be a diffeomorphism on the same support.}
was explored in \cite{huang2020convex}. A promising way to extend CPFs to more general Riemannian manifolds builds on results from optimal transport (OT) theory \cite{villani2009optimal, mccann2001polar} using cost-convex/concave potentials, which are a natural generalisation of convex/concave potentials in the Euclidean space. This has been explored in \cite{sei2013jacobian, cohen2021riemannian, rezende2019equivariant}.

A cost-concave (c-concave) potential with cost $c: \mathcal{M} \times \mathcal{M} \rightarrow \mathbb{R}$ is a function $\psi^c: \mathcal{M} \rightarrow \mathbb{R}$ that is not identically $-\infty$ for which there exists a dual function $\psi: \mathcal{M} \rightarrow \mathbb{R}$ such that
\begin{align}
\psi^c(x) &= \inf_y \half c(x, y) + \psi(y), \label{eq.def.cconcave}
\end{align} 
\cite[Definition~5.7]{villani2009optimal}, \cite{mccann2001polar}.

As observed in \cite{cohen2021riemannian} when $\mathcal{M}$ is a smooth Riemannian manifold,
OT theory has a powerful result 
that given two measures $\mu$, $\nu$ on $\mathcal{M}$, with $\mu$ being absolutely continuous with respect to the volume-measure on $\mathcal{M}$ and a cost $C[f] = \int d\mu(x) c(x, f(x))$ with $c(x,y) = d(x, y)^2$, where $d$ is the geodesic distance, there is a unique $\half d(x, y)^2$-concave potential $\psi^c$ that minimises $C[f]$ such that $\nu$ is the push-forward of $\mu$ through the diffeomorphism $f(x) = \exp_x - \nabla_x \psi^c(x)$, where $\exp_x: T_x\mathcal{M} \rightarrow \mathcal{M}$ is the Riemannian exponential map at the point $x \in \mathcal{M}$ and $\nabla_x$ is the covariant derivative at $x$, \cite[Theorem~9]{mccann2001polar} and \cite[Corollary~10.44]{villani2009optimal}. 
This result gives us the confidence that, if we can parametrise a sufficiently expressive family of c-concave potentials, it is possible to approximate an arbitrary target measure $\nu$ starting from a simple base measure $\mu$. In what follows we will only consider the case $c(x,y) = d(x, y)^2$. 

In this paper we explore normalising flows constructed from the exponential map of the gradients of cost-concave potentials as outlined above, building on work from \cite{sei2013jacobian, cohen2021riemannian, rezende2019equivariant}. Our contributions are as follows:
\begin{enumerate}
    \item We introduce {\it Implicit Riemannian Concave Potential Maps, IRCPMs} which extend {\it Riemannian Convex Potential Maps}, RCPMs from \cite{cohen2021riemannian} by allowing for much more general c-concave functions, \Cref{sec.cconcave.nnets}, and by making it easy to incorporate symmetries of the target density, \Cref{sec.symm};
    \item We extend results from implicit layers, \cite{zhang2020implicitly, amos2017optnet}, allowing higher-order implicit derivatives to be efficiently computed, \Cref{sec.higher_order_grads};
    \item We discuss conditions under which the infimum problem necessary to define the potential $\psi^c$ can be simply solved by gradient descent, Section~\ref{sec.bouding_derivatives_psi}. We give a concrete criterion for the $S^n$ case, Section~\ref{sec.case_of_Sn}, and experimentally verify that criterion on $S^2$, Section~\ref{sec.multi_modal_spheres}.
\end{enumerate}
Finally we provide proof-of-concept experimental results on toy densities on tori in \Cref{sec.multi_modal_tori} and spheres in \Cref{sec.multi_modal_spheres}.

\begin{figure}[t!]
\centering
\begin{subfigure}{.33\textwidth}
  \centering
  \includegraphics[width=0.9\textwidth]{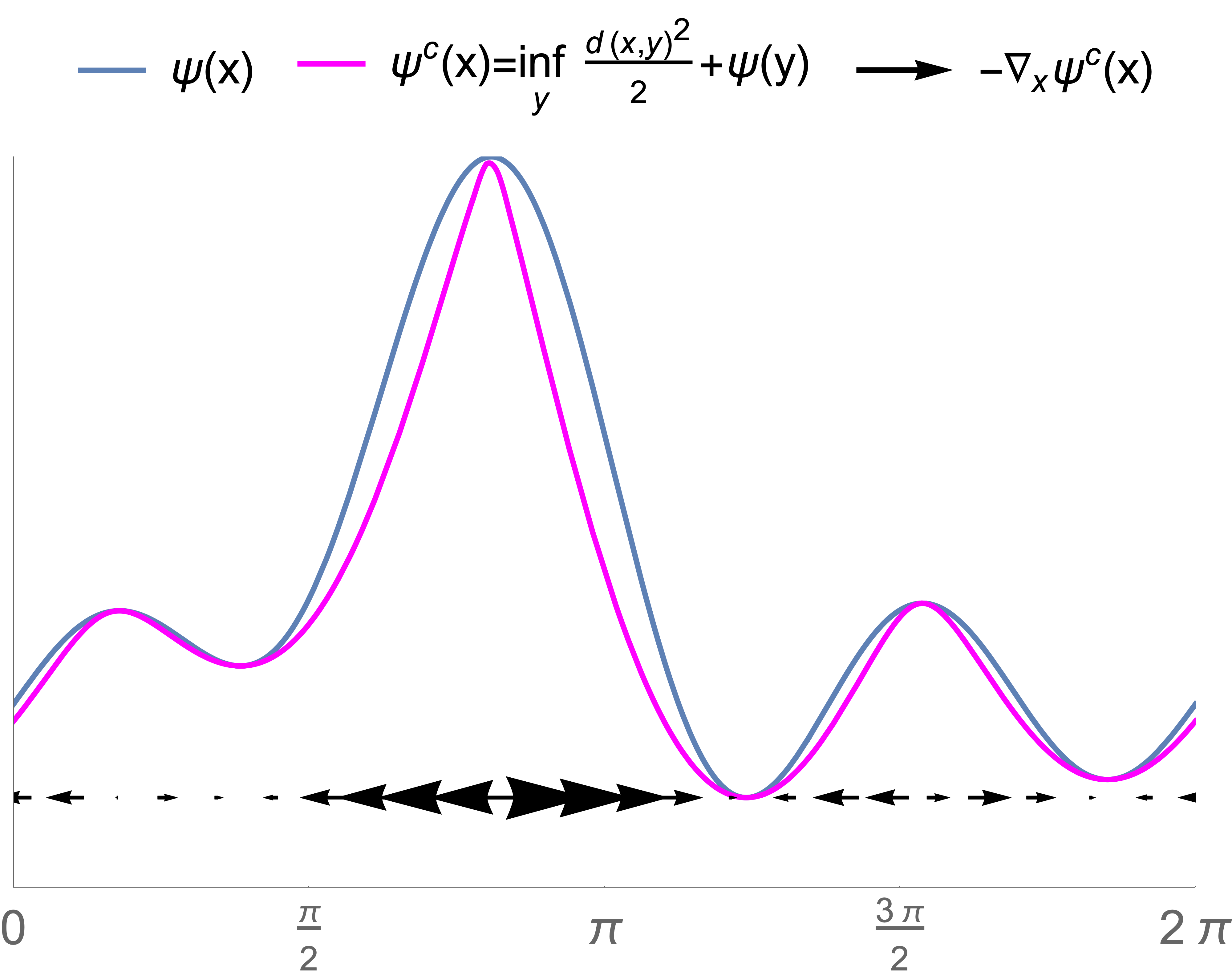}
\end{subfigure}%
\begin{subfigure}{.33\textwidth}
  \centering
  \includegraphics[width=0.9\textwidth]{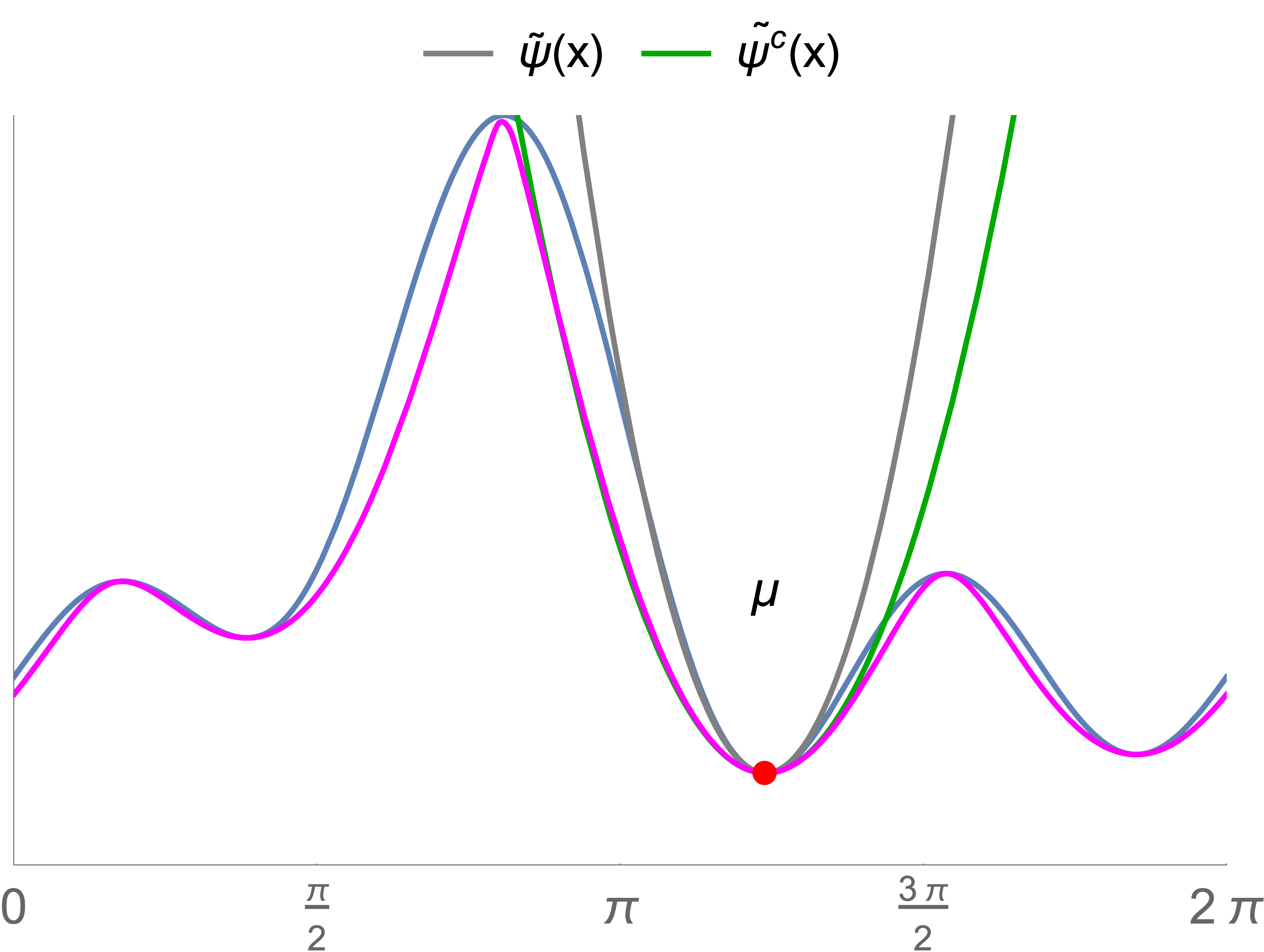}
\end{subfigure}%
\begin{subfigure}{.33\textwidth}
  \centering
  \includegraphics[width=0.9\textwidth]{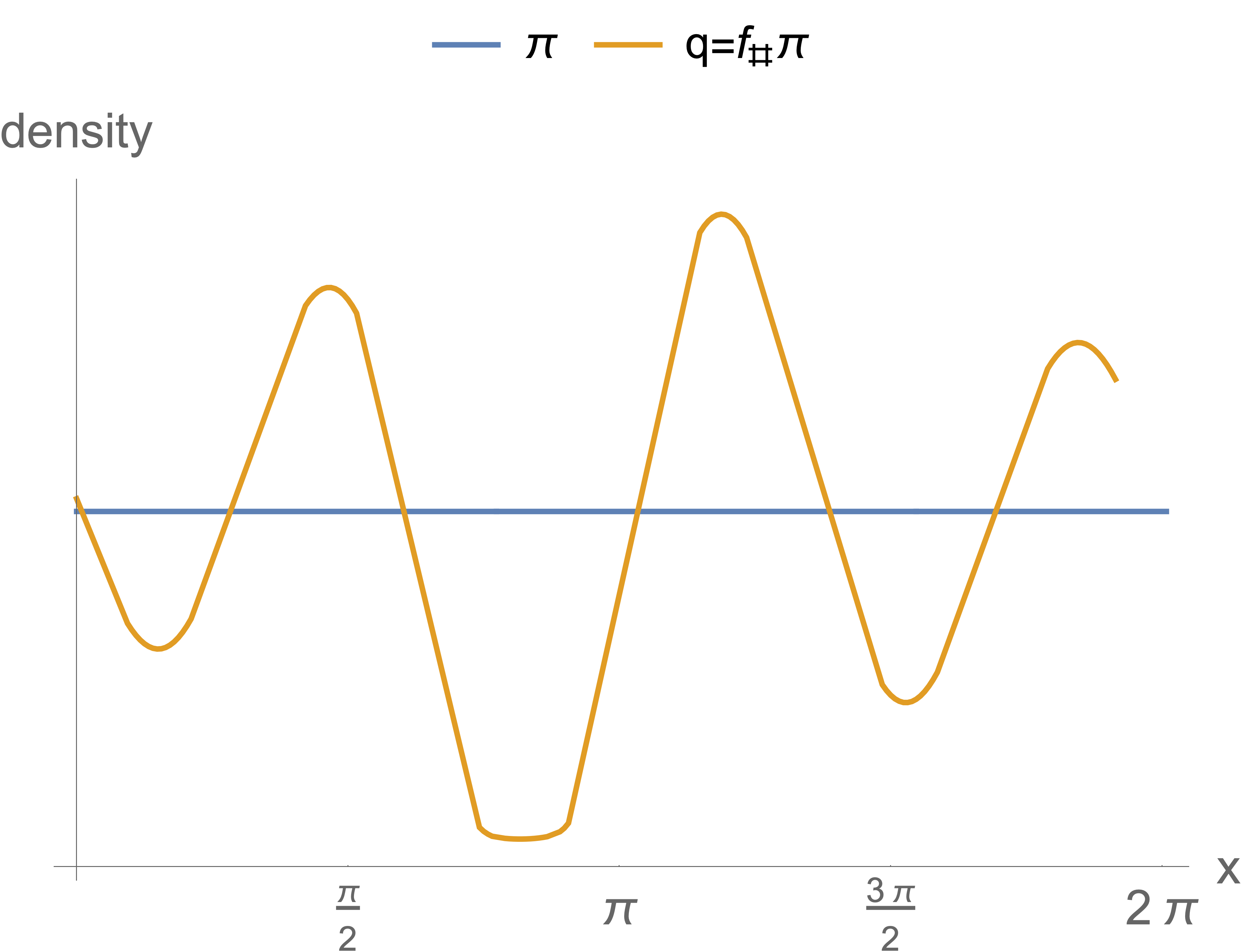}
\end{subfigure}%
\caption{Illustration of Implicit Riemannian Concave Potential Maps, IRCPMs, on the torus $\mathcal{M} = \mathbb{T}^1$. {\bf Left} The potential $\psi(x)$ is parametrised by a neural network with scalar output (blue). Its cost-concave dual $\psi^c(x)$ is defined implicitly via a minimisation problem (magenta). By construction,  $\psi^c(x) \leq \psi(x)$.
The flow is then built from the gradients of $\psi^c(x)$, $f(x) = \exp_x -\nabla_x \psi^c(x)$. In the torus with angular coordinates,  $\exp_x v = (x + v) \mod 2\pi$ so in this example $f(x) = x -\nabla_x \psi^c(x) \mod 2\pi$.
{\bf Middle} The potential $\psi$ and its cost-concave dual $\psi^c(x)$ have the same local minima, but with different second-order structure (e.g. Hessians) as indicated by the quadratic expansions $\tilde{\psi}$ and $\tilde{\psi^c}$ of $\psi$ and $\psi^c$ resp. around one local minimum. Consequently, $\psi$ and $\psi^c$ will have a similar mode structure as the target energy $u$.
{\bf Right} The density defined by the flow $q = f_{\#}\pi$ (yellow) as the push-forward of the Haar measure $\pi(x) = \frac{1}{2\pi}$ (blue). Note how the local minima in $\psi^c$ induce modes in $q$ at slightly different locations.}\label{fig.illustration}
\end{figure}

\section{Related Work}

Previous work \cite{sei2013jacobian, rezende2020normalizing} has focused on directly parametrising the c-concave function $\psi^c$ with a neural network or other parametric family. This has the advantage of allowing a closed form flow, but it is hard to implement general and expressive solutions. Even for simple manifolds such as spheres, this proves to be a challenge \cite{rezende2020normalizing, cohen2021riemannian}.

{\bf Discrete c-concave potential flows} are a manifold-agnostic solution to parametrise c-concave functions $\psi^c$ whose dual function $\psi$ is assumed to be discrete. This reduces the continuous minimisation in \Cref{eq.def.cconcave} to taking the minimum over a discrete set of values, \cite{cohen2021riemannian}. 
While this solution can be shown to enjoy universal approximation properties, it still has two main issues that we wish to address: (i) Scaling issues to high-dimensions (since discrete c-concave potentials are like mixture models it is hard to scale them to large-dimensional spaces without resorting to auto-regressive architectures)\footnote{Universality of discrete c-concave potentials relies on using $\epsilon$-nets which is a set of points on $\mathcal{M}$ such that any two points in the set are at most $\epsilon$-away from each other. The number of points in the $\epsilon$-net grows as $O(\epsilon^{-d})$, where $d$ is the dimension of $\mathcal{M}$.}; (ii) Difficulty of incorporating target symmetries (there is no computationally cheap mechanism to incorporate known symmetries of the target density into discrete c-concave potentials).

We aim to address issues (i) and (ii) by introducing implicit c-concave potentials, which allow us to work with a much broader family of smooth dual functions $\psi$, while making it straightforward to incorporate target symmetries, \Cref{sec.symm}.

{\bf Implicit components} for neural networks, defined via an inner optimisation, have been explored in many different contexts \cite{baydin2018automatic}; they have been proposed as layers \cite{zhang2020implicitly, amos2017optnet}, in meta-learning \cite{lee2019meta}, generative models \cite{kim2018semi} and planning \cite{de2018end, srinivas2018universal} to name a few applications.

In \cite{zhang2020implicitly, amos2017optnet} implicit layers defined via minimisation or fixed-point equations have been studied. The first-order gradients of implicit layers are computed using the implicit function theorem for efficiency. Our formulation follows the same principles, but we extend the methodology to compute up to third-order gradients efficiently as necessary for our application, \Cref{sec.higher_order_grads}.

{\bf Equivariant manifold flows} \cite{katsman2021equivariant} are a  general solution to build $G$-equivariant ODE flows on manifolds. Working with ODEs in manifolds introduces several technical challenges such as complex integration procedures and computationally expensive estimation of the model's likelihood.

{\bf OptimalTransport-Flows} \cite{onken2021ot} are ODE-flows defined in the Euclidean space with additional transport costs added during optimisation.

\section{Implicit Riemannian Concave Potential Maps, IRCPMs}

\subsection{Implicit c-concave functions with NNets}\label{sec.cconcave.nnets}

Inspired by the discrete cost-concave potentials proposed in \cite{cohen2021riemannian} and work on implicit neural layers (i.e. layers whose output is defined via an inner optimisation) \cite{zhang2020implicitly, amos2017optnet}, we want to explore more general c-concave functions $\psi^c$ on a manifold $\mathcal{M}$ by working directly with its definition, \Cref{eq.def.cconcave}, as a minimisation problem. We use neural networks to parametrise a family of functions $\psi_{\theta}: \mathcal{M} \rightarrow \mathbb{R}$, and implicitly define the function $\psi^c_{\theta}(x)$ via the minimisation 
\begin{align}
    \psi^c_{\theta}(x) &= \min_y h(x, y), \label{eq.nnet.phi}
\end{align}
where $h(x, y) = \half d(x, y)^2 + \psi_{\theta}(y)$.
The implicit c-concave function $\psi^c_{\theta}(x)$ is then used to define a normalising flow $f_{\theta}: \mathcal{M} \rightarrow \mathcal{M}$ via
\begin{align}
    f_{\theta}(x) &= \exp_x -\nabla_x \psi_{\theta}^{c}(x). \label{eq.def.flow}
\end{align}
This flow is illustrated in \Cref{fig.illustration}(left), where we show a concrete example of $\psi$, $\psi^c$ and $f_{\theta}$ on the torus $\mathbb{T}^1$.
The model's density $q_{\theta}: \mathcal{M} \rightarrow \mathbb{R}^+$ is defined via the push-forward of a simple base measure $\pi: \mathcal{M} \rightarrow \mathbb{R}^+$ (such as the Haar measure if $\mathcal{M}$ is a Lie group), 
\begin{align}
\ln q_{\theta}(f_{\theta}(x)) &= \ln \pi(x) - \ln |\det E_{f_{\theta}(x)} J_{f_{\theta}}(x) E_{x}^\intercal|, \label{eq.model.likelihood}    
\end{align} 
where $J_{f_{\theta}}$ is the Jacobian matrix of $f_{\theta}$ and $E_{x}$ is a matrix that projects to the tangent space $T_x\mathcal{M}$ at $x$ (this projection is necessary when working with an embedding of $\mathcal{M}$ into $\mathbb{R}^d$ instead of a local chart). In what follows we will refer to this family of normalising flows as {\it implicit Riemannian concave potential maps}, IRCPMs.

\subsection{Incorporating symmetries} \label{sec.symm}

One of the main motivations of this work is to easily incorporate symmetries into our models. Symmetries are described by a group $G$ of isometries of the manifold $\mathcal{M}$.

We could consider a G-invariant extension of discrete c-concave potentials \cite{cohen2021riemannian} via the modification 
\begin{align}
 \psi^c(x) &= \min_i \half d(x, \Gamma(y_i))^2 + \alpha_i,
\end{align}
where $d(x, \Gamma(y_i))$ is the Riemannian distance between the point $x$ and the orbit $\Gamma(y_i)$ of the point $y_i$ under the action of the symmetry group $A_g$, $d(x, \Gamma(y_i)) = \inf_g d(x, A_g y_i)$.  Unfortunately, there is no simple closed-formula for $d(x, \Gamma(y))$ on interesting settings (e.g. quantum simulations on a lattice) which makes this impractical.

We show in \Cref{lem.equivariant} that it is relatively simple to build G-equivariant IRCMPs: it is sufficient that (i) the group action $A_g: \mathcal{M} \rightarrow \mathcal{M}$ is an isometry (i.e. leaves the intrinsic distance function invariant) and (ii) that the function $\psi_{\theta}(x)$ is $G$-invariant.

\begin{lem} \label{lem.equivariant}
Let $(\mathcal{M}, h)$ be a Riemannian manifold with distance $d$ and isometry group $G$; and let $\psi : \mathcal{M} \rightarrow \mathbb{R}$ be a G-invariant scalar function. Then the diffeomorphism $f: \mathcal{M} \rightarrow \mathcal{M}$ defined via $f(x) = \exp_x -\nabla \psi^c(x)$ is G-equivariant.
\end{lem}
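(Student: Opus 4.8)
The plan is to show that $f$ intertwines the $G$-action, i.e. $f(A_g x) = A_g f(x)$ for all $g \in G$ and $x \in \mathcal{M}$. The argument proceeds in three stages: first transfer the $G$-invariance of $\psi$ to the implicit potential $\psi^c$, then transfer it to the covariant gradient $\nabla \psi^c$ (as an equivariance of a tangent vector field), and finally push it through the exponential map using the fact that $A_g$ is an isometry.

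First I would prove that $\psi^c$ is $G$-invariant. Since $A_g$ is an isometry, $d(A_g x, A_g y) = d(x, y)$; moreover $y \mapsto A_g y$ is a bijection of $\mathcal{M}$, so the change of variables $y \mapsto A_g^{-1} y$ in the defining minimisation $\psi^c(x) = \min_y \tfrac12 d(x,y)^2 + \psi(y)$ gives
\begin{align}
\psi^c(A_g x) &= \min_y \tfrac12 d(A_g x, y)^2 + \psi(y) = \min_{y'} \tfrac12 d(A_g x, A_g y')^2 + \psi(A_g y') \nonumber \\
&= \min_{y'} \tfrac12 d(x, y')^2 + \psi(y') = \psi^c(x), \nonumber
\end{align}
where the penultimate equality uses both that $A_g$ is an isometry and that $\psi$ is $G$-invariant. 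Next, differentiating the identity $\psi^c \circ A_g = \psi^c$ and using that $A_g$ is a Riemannian isometry (so its differential $dA_g$ is a linear isometry $T_x\mathcal{M} \to T_{A_g x}\mathcal{M}$ compatible with the Levi-Civita connection, i.e. it preserves the metric-gradient correspondence), one obtains the equivariance of the gradient vector field, $\nabla_{A_g x} \psi^c = dA_g\big(\nabla_x \psi^c\big)$.

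Finally I would invoke the naturality of the exponential map under isometries: for any isometry $A_g$ and any $v \in T_x\mathcal{M}$, $\exp_{A_g x}\big(dA_g(v)\big) = A_g\big(\exp_x v\big)$ (isometries map geodesics to geodesics with the corresponding initial data). Applying this with $v = -\nabla_x \psi^c(x)$ and using the gradient equivariance from the previous step yields
\begin{align}
f(A_g x) &= \exp_{A_g x}\big(-\nabla_{A_g x}\psi^c\big) = \exp_{A_g x}\big(dA_g(-\nabla_x\psi^c(x))\big) = A_g\big(\exp_x -\nabla_x\psi^c(x)\big) = A_g f(x), \nonumber
\end{align}
which is precisely $G$-equivariance.

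The routine steps are the invariance of $\psi^c$ (a one-line change of variables) and the naturality of $\exp$ under isometries (a standard fact from Riemannian geometry). The only place needing genuine care is the middle step: justifying that the covariant gradient is equivariant, i.e. that $dA_g$ commutes with taking gradients. This is where the hypothesis that $A_g$ is an isometry (not merely a diffeomorphism) is essential — it is what makes the metric used to raise the index $d\psi^c$ to $\nabla\psi^c$ transform correctly. I would also remark that the claim implicitly assumes $\psi^c$ is differentiable (so that $\nabla\psi^c$ and hence $f$ make sense as stated); that regularity is taken for granted here as part of the standing setup, consistent with $f$ being called a diffeomorphism, and the equivariance argument itself does not break at points of non-smoothness since it is really an identity of variational problems.
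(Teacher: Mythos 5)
Your proposal is correct and follows essentially the same three-stage argument as the paper: $G$-invariance of $\psi^c$ by change of variables in the infimum, equivariance of the gradient of an invariant function, and equivariance of $\exp$ under isometries (which the paper proves via uniqueness of geodesics, a fact you cite as standard). No gaps to report.
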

\begin{proof}

First we show that the cost-concave dual function $\psi^c(x)$ is G-invariant
\begin{align*}
    \psi^c(A_g x) &= \inf_y \half d(A_g x, y)^2 + \psi(y)\\
    &= \inf_y \half d( x, A_{g^{-1}} y)^2 + \psi(y) && \text{$A_g$ is an isometry with adjoint $A_{g^{-1}}$}\\
    &= \inf_y \half d( x, A_{g^{-1}} y)^2 + \psi(A_{g^{-1}} y) && \text{$\psi$ is invariant}\\
    &= \inf_z \half d(x, z)^2 + \psi(z) && \text{$A_g$ is invertible}\\
    &= \psi^c(x).
\end{align*}
It remains to show that $\exp_x -\nabla \psi^c(x)$ is $G$-equivariant if $\psi^c(x)$ is $G$-invariant. As observed in \cite{katsman2021equivariant}(Th1) the gradient of a G-invariant scalar function is G-equivariant. Therefore the term $-\nabla \psi^c(x)$ is G-equivariant.
Finally, the exponential map $\exp_x$ is equivariant to isometries. Indeed, if $x\in\mathcal{M}$, $\zeta\in T_x\mathcal{M}$ and $g\in G$, the two curves $\gamma(t)=\exp_{A_g \cdot x} A_g \cdot t\zeta$ and $\delta(t)=A_g\cdot\exp_x t\zeta$ are two geodesics with $\gamma(0)=\delta(0)=A_g \cdot x$ and $\frac{\partial}{\partial t}|_{t=0}\gamma(t) = \frac{\partial}{\partial t}|_{t=0}\delta(t) = A_g\cdot\zeta$. These two geodesics are therefore equal. Evaluating them at $t=1$ gives us the equivariance of $\exp$.
\end{proof}

\subsection{Relation between \texorpdfstring{$\psi$}{psi}, \texorpdfstring{$\psi^c$}{psic}} \label{sec.modes.psi}

An interesting observation is that local minima $\mu_i$ of $\psi$ are also local minima of $\psi^c$, as formalised in \Cref{lem.quadratic_in_quadratic_out} in the Euclidean space. Additionally, \Cref{lem.quadratic_in_quadratic_out} relates the Hessian, $M$, of $\psi$ and $M^c = \mathbb{I} + M^{-1}$ of $\psi^c$ at the local minima.
Although this observation is limited to Euclidean geometry it suggests that for points $x \in \mathcal{M}$ near the local minima $\mu_i$ of $\psi$ the flow $f (x) = \exp_x - \nabla \psi^c (x)$ will increase the 
concentration of the base density in the regions $x \approx
\mu_i$. Consequently, $\psi$ and $\psi^c$ should have a similar local minima structure as the target energy. This suggests a heuristic to initialise $\psi$ as a function of the form $\psi(x) = g(u(x)) + \Delta(x)$, where $u(x)$ is the target energy, $g(x)$ is a learned non-decreasing function and $\Delta(x)$ is a learned "correction". This relationship between $\psi$ and $\psi^c$ is illustrated in \Cref{fig.illustration}(right) using an example $\psi$ and $\psi^c$ defined on the torus $\mathbb{T}^1$.

We can empirically observe these relationships in \Cref{fig.torus.modes} and \Cref{fig.sphere_psi}
where the learned functions $\psi$ and $\psi^c$, develop the same modes as the
target density.

\begin{lem} \label{lem.quadratic_in_quadratic_out}
  Let $\mathcal{M} =\mathbb{R}^d$ and $M$ be a positive definite matrix and $\mu \in \mathcal{M}$. Then
  the c-concave function $\psi^c$ dual to $\psi = \half
  (y - \mu)^\intercal M (y - \mu)$ is given by
  \begin{align*}
      \psi^c (x) &= \half (x - \mu)^\intercal [\mathbb{I} - (\mathbb{I}+ M)^{- 1}] (x
     - \mu) + \text{cst}.
  \end{align*}
\end{lem}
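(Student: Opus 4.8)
The plan is to evaluate the defining infimum in \Cref{eq.def.cconcave} directly, since on $\mathbb{R}^d$ with $c(x,y)=d(x,y)^2=\|x-y\|^2$ it is just a finite-dimensional strictly convex quadratic programme. First I would translate away $\mu$: writing $a = x-\mu$ and substituting $z = y-\mu$, the defining relation becomes
\begin{align*}
\psi^c(x) = \inf_{z\in\mathbb{R}^d}\ \half\|a-z\|^2 + \half\,z^\intercal M z .
\end{align*}
Because $M$ is positive definite, $\mathbb{I}+M$ is positive definite, so the objective is strictly convex and coercive in $z$; hence the infimum is a genuine minimum, attained at the unique stationary point. Setting the gradient $-(a-z)+Mz$ to zero gives $(\mathbb{I}+M)z=a$, i.e. $z^\star=(\mathbb{I}+M)^{-1}a$.

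Next I would substitute $z^\star$ back into the objective. Using the identity $\mathbb{I}-(\mathbb{I}+M)^{-1}=(\mathbb{I}+M)^{-1}M$, one gets $a-z^\star=(\mathbb{I}+M)^{-1}Ma$, so (using symmetry of $M$ and of $(\mathbb{I}+M)^{-1}$)
\begin{align*}
\psi^c(x) = \half\,a^\intercal\big[\,M(\mathbb{I}+M)^{-2}M + (\mathbb{I}+M)^{-1}M(\mathbb{I}+M)^{-1}\,\big]a .
\end{align*}
Every matrix appearing in the bracket is a rational function of the single symmetric matrix $M$, so they all commute and are simultaneously diagonalised in an orthonormal eigenbasis of $M$. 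It therefore suffices to verify the claimed identity one eigenvalue at a time: for each eigenvalue $m>0$ of $M$ one needs $\frac{m^2}{(1+m)^2}+\frac{m}{(1+m)^2}=1-\frac{1}{1+m}$, which is immediate since the left-hand side equals $\frac{m(m+1)}{(1+m)^2}=\frac{m}{1+m}$. This gives $\psi^c(x)=\half(x-\mu)^\intercal[\mathbb{I}-(\mathbb{I}+M)^{-1}](x-\mu)$, i.e. the claimed formula with additive constant $0$; the "cst" in the statement is harmless, since adding a constant to $\psi$ merely shifts $\psi^c$ by the same constant.

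I do not expect a genuine obstacle here: the only mild points requiring care are (a) the justification that the infimum is attained and the stationary point is the global minimiser, which follows from strict convexity and coercivity of the quadratic (guaranteed by $\mathbb{I}+M\succ 0$), and (b) the elementary matrix algebra, which is trivialised by passing to the eigenbasis of $M$ so that all computations reduce to the scalar identity above. If desired, one may additionally note that the resulting $\psi^c$ is indeed $c$-concave, being by construction a $c$-transform, but this is automatic and not needed for the statement.
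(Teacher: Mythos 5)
Your proof is correct and takes essentially the same route as the paper's: both solve $\nabla_y h(x,y)=0$ for the unique minimiser (global by positive definiteness of $\mathbb{I}+M$) and substitute it back into the objective. The only difference is the algebraic bookkeeping --- you translate to $\mu$-centred coordinates and verify the matrix identity in the eigenbasis of $M$, whereas the paper identifies the resulting quadratic in $x$ by its centre and Hessian; as a minor bonus your computation shows the additive constant is in fact zero.
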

\begin{proof}
  Since $M$ is assumed positive definite, $h(x, y) = \half d(x, y)^2 + \psi(y)$ has a global minimum $y^{\star}(x) = (\mathbb{I}+ M)^{-1} (x + M \mu)$ which is given by solving $\nabla_y h (x, y) = 0$.
  We can re-arrange the terms of $h(x, y)$ to a more convenient form by performing a Taylor expansion around $y=0$ up to second-order (since $h$ is quadratic, the expansion is exact),
  \begin{align*}
    h(x, y) &= \half \|x\|^2 - (x + M \mu)^\intercal y + \half y^\intercal (\mathbb{I}+ M)y + \half\mu^\intercal M\mu.
  \end{align*}
  Replacing $y^{\star}(x)$ above gives
  \begin{align*}
    \psi^c (x) & = h (x, y^{\star} (x)) = \half \| x \|^2 - \half (x
    + M \mu)^\intercal (\mathbb{I}+ M)^{- 1} (x + M \mu) + \half\mu^\intercal M\mu.
  \end{align*}
  Now we note that in the quadratic form above $\partial_x h (x, y^{\star} (x)) =0 \Rightarrow x = \mu$, so it is a quadratic form centred at $\mu$ with Hessian matrix $H=\mathbb{I} - (\mathbb{I}+ M)^{- 1}$, so it can be re-written as 
  \begin{align*}
   \psi^c (x) &= \text{cst} + \half (x - \mu)^\intercal H (x -\mu). & 
  \end{align*}
\end{proof}

\section{Solving the infimum problem}

Given a generic $\psi$, computing $\psi^c(x) = \inf_y h(x, y)$ can be a difficult problem. We are interested in finding conditions on $\psi$ which guarantee this problem can be solved with gradient descent.

\subsection{Bounding the derivatives of \texorpdfstring{$\psi$}{psi}}\label{sec.bouding_derivatives_psi}

Let $\varphi(x)$ be the function obtained by following gradient descent along $y$ on $h(x, y)$, initialised at $y=x$. Computing $\varphi$ is a much simpler problem than $\psi^c$. Below we will see that by controlling the norm of the gradient of $\psi$, as well as the operator norm of its Hessian, we can guarantee that $\psi^c=\varphi$ everywhere.

First, let's see how to ensure that the global minimum is in a ball around $x$.

\begin{lem}\label{lem:gradient-psi}
Assume $\lambda > 0$ is such that $||\grad(\psi)|| < \lambda/2$ everywhere, then if $d(x, y) > \lambda$ we have $h(x, y) > h(x, x)$. In particular, the infimum in $y$ of $h(x, y)$ is within a ball of radius $\lambda$ around $x$.
\end{lem}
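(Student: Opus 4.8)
The plan is to compare $h(x,y)$ directly with its value at $y=x$. Since $d(x,x)=0$ we have $h(x,x)=\psi(x)$, so the desired inequality $h(x,y)>h(x,x)$ is equivalent to $\tfrac12 d(x,y)^2 > \psi(x)-\psi(y)$. Thus everything reduces to controlling the increment $\psi(x)-\psi(y)$ by a multiple of $d(x,y)$, after which the hypothesis $d(x,y)>\lambda$ finishes the argument.

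To bound the increment I would use that a compact Riemannian manifold is complete, so by Hopf--Rinow there is a unit-speed minimizing geodesic $\gamma:[0,r]\to\mathcal{M}$ from $y$ to $x$ with $r=d(x,y)$. Applying the fundamental theorem of calculus to $t\mapsto\psi(\gamma(t))$ together with Cauchy--Schwarz and the hypothesis $\|\grad(\psi)\|<\lambda/2$ gives
\[
\psi(x)-\psi(y)=\int_0^r\big\langle \grad(\psi)(\gamma(t)),\dot\gamma(t)\big\rangle\,dt\le\int_0^r\|\grad(\psi)(\gamma(t))\|\,dt<\tfrac{\lambda}{2}\,r .
\]
Hence, when $r=d(x,y)>\lambda$,
\[
h(x,y)-h(x,x)=\tfrac12 r^2+\psi(y)-\psi(x)>\tfrac12 r^2-\tfrac{\lambda}{2}r=\tfrac{r}{2}(r-\lambda)>0 ,
\]
which is the claim.

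For the last sentence, note $\inf_y h(x,y)\le h(x,x)=\psi(x)$; by compactness of $\mathcal{M}$ and continuity of $h(x,\cdot)$ the infimum is attained at some $y^\star$, and the inequality just proved forces $d(x,y^\star)\le\lambda$, so the minimiser lies in the closed ball of radius $\lambda$ about $x$. I do not expect a genuine obstacle here: the only points requiring care are that we need merely $C^1$ regularity of $\psi$ and the existence of a minimizing geodesic (both available), whereas the non-smoothness of $d(\cdot,\cdot)$ on the cut locus never enters, since we differentiate only $\psi$ along $\gamma$ and not the distance function itself.
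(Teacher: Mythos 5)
Your argument is correct and is essentially the paper's own proof: both integrate $\psi$ along a minimizing geodesic between $x$ and $y$ (the paper writes it as $t\mapsto\psi(\exp_x(t\zeta))$, you as a unit-speed geodesic), bound the increment by $\tfrac{\lambda}{2}d(x,y)$ via the gradient hypothesis, and conclude from $\tfrac12 r^2-\tfrac{\lambda}{2}r>0$ for $r>\lambda$. Your added remark on attainment of the infimum by compactness and on avoiding the cut locus issue is a fine, minor elaboration of the same route.
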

\begin{proof}
Let $y$ be such that $d(x, y) > \lambda$. Since $\mathcal{M}$ is complete, there exists $\zeta\in T_x\mathcal{M}$ with $y=\exp_x(\zeta)$ and $d(x, y) = ||\zeta||$. For $t\in [0, 1]$, let $\alpha(t)=\psi(\exp_x(t\zeta))$.

We have
\begin{equation}
h(x, y) - h(x, x) = \half||\zeta||^2 + \alpha(1) - \alpha(0) = \half||\zeta||^2 + \int_0^1 dt~\alpha'(t)
\end{equation}

We have $|\int_0^1 dt~\alpha'(t)| \leq \int_0^1 dt~ |\alpha'(t)| \leq \int_0^1 dt~ ||\grad(\psi)||~||\zeta|| \leq \half\lambda ||\zeta||$. This means $h(x, y) - h(x, x) \geq \half||\zeta||^2 - \half\lambda||\zeta||$. Since the function $\half \mathbb{T}^2-\half\lambda t$ is positive for $t > \lambda$, the result follows.

\end{proof}

The above Lemma~\ref{lem:gradient-psi} lets us constrain the location of the global minimum inside a ball. Let's now see a condition to ensure that this minimum is unique inside this ball, and is reached by gradient descent. We write $H_\psi$ for the Hessian of $\psi$, and $||H_\psi||$ for the operator norm of $H_\psi$.

\begin{lem}\label{lem:phi-is-varphi}
We can choose $\lambda>0$ and $\eta>0$ such that if $||\grad(\psi)|| < \lambda/2$ and $||H_\psi|| < \eta$ everywhere, then for any $x$, $h(x, y)$ has its global minimum within the ball $B_x(\lambda)$. Gradient descent along $y$ on $h(x, y)$, initialised at $y=x$, finds this minimum. In other words, $\psi^c=\varphi$.
\end{lem}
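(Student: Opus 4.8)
The plan is to combine Lemma~\ref{lem:gradient-psi} with a local convexity argument. By Lemma~\ref{lem:gradient-psi}, once $\|\grad(\psi)\| < \lambda/2$ everywhere, the global minimum of $y \mapsto h(x,y)$ lies in the closed ball $B_x(\lambda)$, so the outstanding issues are: (a) that $h(x,\cdot)$ has a \emph{unique} critical point in a slightly larger ball, and it is a nondegenerate minimum; and (b) that gradient descent started at $y=x$ stays in that ball and converges to that point. I would fix a radius $R$ (say $R = 2\lambda$) smaller than the injectivity radius of $\mathcal{M}$ — here compactness of $\mathcal{M}$ is used to get a uniform positive injectivity radius, and also a uniform two-sided bound on the Hessian of $y \mapsto \half d(x,y)^2$ on such balls. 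On $B_x(R)$, in normal coordinates centred at $x$, the function $y \mapsto \half d(x,y)^2$ has Hessian close to the identity; more precisely there is a constant $c_0 > 0$ (depending only on $\mathcal{M}$ and $R$) with $H_{\frac12 d(x,\cdot)^2} \succeq c_0\, \Id$ on $B_x(R)$. Then $H_h = H_{\frac12 d(x,\cdot)^2} + H_\psi \succeq (c_0 - \eta)\Id$, so choosing $\eta < c_0$ makes $h(x,\cdot)$ geodesically strongly convex on $B_x(R)$.

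With strong convexity on $B_x(R)$ in hand, the key steps are: (1) strong convexity gives at most one critical point in $B_x(R)$, and by Lemma~\ref{lem:gradient-psi} (with $\lambda < R$) there is exactly one, call it $y^\star(x)$, lying in $B_x(\lambda)$; (2) estimate how far gradient descent can travel — since $\grad_y h(x,x) = \grad_y(\tfrac12 d(x,\cdot)^2)|_{y=x} + \grad\psi(x) = \grad\psi(x)$, the initial gradient has norm $< \lambda/2$, and strong convexity together with the gradient bound confines the whole descent trajectory to a ball of radius comparable to $\lambda$, which we arrange to sit inside $B_x(R)$ by taking $\lambda$ small relative to $R$; (3) invoke the standard fact that Riemannian gradient descent with an appropriate (sufficiently small, or suitably chosen) step size on a geodesically strongly convex function with Lipschitz gradient on a geodesic ball converges to the unique minimiser. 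Concatenating, gradient descent from $y=x$ converges to $y^\star(x)$, which by step (1) and Lemma~\ref{lem:gradient-psi} realises the global infimum, i.e.\ $\varphi(x) = h(x, y^\star(x)) = \psi^c(x)$; since $x$ was arbitrary, $\varphi = \psi^c$.

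The quantitative bookkeeping is: first pick $R$ below the injectivity radius; this determines $c_0$ (lower Hessian bound on $\tfrac12 d(x,\cdot)^2$) and an upper curvature-type constant $c_1$ (so $H_h \preceq (c_1+\eta)\Id$, giving Lipschitz gradient); then require $\eta < c_0/2$, say, for strong convexity with modulus $\geq c_0/2$; finally choose $\lambda$ small enough that (i) $\lambda < R$ so Lemma~\ref{lem:gradient-psi} localises the minimum inside $B_x(R)$, and (ii) the descent path from a point with initial gradient norm $<\lambda/2$ on a $(c_0/2)$-strongly convex function never leaves $B_x(R)$ — a crude bound like ``distance to minimiser $\leq (2/c_0)\|\grad_y h(x,x)\| < \lambda/c_0$'' plus monotone decrease of the distance-to-minimiser under gradient descent suffices, so $\lambda < c_0 R$ works.

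The main obstacle is step (2)–(3): ensuring the gradient-descent iterates never exit the region $B_x(R)$ where we control the geometry, and stitching this to a clean convergence statement. On a manifold one cannot simply appeal to Euclidean convexity; one needs the trajectory to remain within a single geodesically convex ball on which $\tfrac12 d(x,\cdot)^2$ is well-behaved (below the injectivity radius), and the step size must be small enough that a discrete gradient step does not overshoot out of the ball. This is where compactness of $\mathcal{M}$ (uniform injectivity radius and uniform Hessian bounds on $\tfrac12 d(x,\cdot)^2$) and the smallness of $\lambda, \eta$ are essential; everything else is a routine strong-convexity estimate.
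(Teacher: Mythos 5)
Your argument is sound and reaches the same conclusion, but it closes the key step differently from the paper. Both proofs share the same core mechanism: write $H_h = H_c + H_\psi$, observe that $H_c$ is the identity at $y=x$ and stays close to it (resp.\ uniformly bounded below) on a small ball whose radius is controlled via compactness of $\mathcal{M}$, and then use $||H_\psi||<\eta$ to make $H_h$ positive definite there; Lemma~\ref{lem:gradient-psi} localises the global minimum in that ball in both cases. Where you diverge is in how uniqueness of the critical point and convergence of the descent are obtained: you upgrade the Hessian bound to geodesic strong convexity of $h(x,\cdot)$ on a ball $B_x(R)$, get uniqueness from strict convexity, and invoke quantitative Riemannian gradient-descent theory (Lipschitz gradient, step sizes, monotone distance-to-minimiser) together with a trajectory-confinement induction; the paper instead stays inside $\bar{B}_x(\lambda)$, notes that $-\grad_y h$ points strictly inward on the boundary, and applies Morse theory (\cite[Lemma~2.2]{milnor2016morse}, \cite[Cor.~8.3.1]{jost2008riemannian}): the ball decomposes into stable manifolds of the critical points, positive-definite Hessians make every stable manifold open and full-dimensional, so connectedness forces a single critical point whose basin is the whole ball, and the gradient flow from $y=x$ converges to it. Your route is more quantitative and speaks directly to the discrete algorithm actually implemented (explicit modulus $c_0-\eta$, step-size conditions), but it needs $B_x(R)$ to be geodesically convex — so $R$ must be below the convexity radius, not merely the injectivity radius as you state, a fixable but real requirement — and it needs the careful bookkeeping you yourself flag as the main obstacle; the paper's Morse-theoretic route avoids convexity of the ball and all step-size analysis by arguing about the continuous gradient flow topologically, at the price of giving no rates and of addressing the flow rather than a discretised descent. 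Your order of quantifiers in the final paragraph (pick $R$, then $\eta$, then $\lambda$) is the right one and matches the paper's ``for any $\eta\in(0,1)$, if $\lambda$ is small enough'' structure, though your opening suggestion $R=2\lambda$ briefly inverts it.
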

\begin{proof}
Assume that $\lambda$ is small enough such that for any $x$ in $\mathcal{M}$, the map $\exp_x$ is a diffeomorphism from the euclidean unit closed ball in $T_x\mathcal{M}$ onto its image $\bar{B}_x(\lambda)$. Also, as in Lemma~\ref{lem:gradient-psi}, assume that $||\grad(\psi)|| < \lambda/2$ everywhere.

Since $h(x, y)$ is a sum of two terms, $\half d(x, y)^2$ and $\psi(y)$, its Hessian $H_h$ with respect to $y$ is also a sum of two terms, respectively $H_c$ and $H_\psi$. At $y=x$, the Hessian $H_c$ is exactly the identity $\Id$. This means that for any $\eta$ in $(0, 1)$, if $\lambda$ is small enough, then $||H_c - \Id|| < 1-\eta$ in $\bar{B}_x(\lambda)$. Now, assume that $||H_\psi|| < \eta$ everywhere. This implies that $||H_c - \Id + H_\psi|| < 1$ and $H_h = \Id + (H_c - \Id + H_\psi)$ is invertible with eigenvalues in $(0, 2)$. In particular it is positive definite at any $y$ in $\bar{B}_x(\lambda)$. We can immediately conclude from \cite[Lemma~2.2]{milnor2016morse} that critical points of $h(x, \cdot)$ are all isolated minima inside $B_x(\lambda)$.
The vector field $-\grad_y h$ along the boundary of $\bar{B}_x(\lambda)$ points strictly inward the ball. By \cite[Cor.~8.3.1]{jost2008riemannian}, the ball is the disjoint union of its stable manifolds $W^s(q)$\footnote{A stable manifold $W_s(q)$ of a critical point $q$ is the set of points on the manifold for which the integral curves $\gamma(t)$ of $-\grad_h h$ converge to $q$ at $t$ goes to $+\infty$. The dimension of this manifold is given by the number of positive eigenvalues of the Hessian at the critical point $q$. See \cite{jost2008riemannian} for more details.}, where $q$ runs over all the critical points of $h$. Since at every critical points of $h$ the Hessian is positive definite, all the submanifolds $W^s(q)$ are open and of the same dimension as $\mathcal{M}$. Since $\mathcal{M}$ is connected, there is only one critical point $q$ and $W^s(q) = \bar{B}_x(\lambda)$. By definition of $W^s(q)$, gradient descent along $-\grad h$ leads to $q$.
\end{proof}

The above Lemma~\ref{lem:phi-is-varphi} does not tell us how to choose $\lambda$ and $\eta$. This choice will depend on the curvature of the manifold, as it controls the eigenvalues of the Hessian $H_y$ \cite[Section 2.2]{ferreira2006hessian}.

\subsection{Implicit higher-order gradients}\label{sec.higher_order_grads}

From \Cref{lem:phi-is-varphi}, under Lipschitz constraints on $\psi$ and its gradients, the global minimum in \Cref{eq.def.cconcave} will coincide with the local minima $\partial_{y} h(x, y) = 0$. This allows us to formulate \Cref{eq.nnet.phi} as an equality constraint implicit layer \cite{zhang2020implicitly, amos2017optnet}, $F^0_i(x, y) = 0$,  where 
\begin{align}
F^0_i(x, y) &= \partial_{y_i} h(x, y)=0 \quad \forall x \in \mathcal{M}.
\end{align}
The first-order gradients of the minimiser $y^{\star}(x)$ with respect to $x$ can be efficiently computed via the implicit function theorem. 

IRCPMs are built using the first-order gradients of $\psi^c(x)$, which require first-order gradients of $y^{\star}(x)$ with respect to $x$. As a consequence, the log-det-Jacobian terms in \Cref{eq.model.likelihood} will involve second-order derivatives of $y^{\star}(x)$ with respect to $x$. 
Finally, optimisation of the model's likelihood will require computing an extra derivative with respect to the model's paramters $\theta$. This results in 
third-order derivatives of $y^{\star}(x)$ (second-order in $x$ and an extra derivative with respect to $\theta$).

We can efficiently compute higher-order derivatives of $y^{\star}(x)$ with respect to $x$ and $\theta$ by considering a recursive computation of vector-Jacobian products $u_i = v^\intercal \partial_x y_i^{\star}$. As in \cite{zhang2020implicitly, amos2017optnet}, $u_i$ are obtained by solving a linear system of equations
\begin{align}
    F^1_i(x,u) := v^\intercal \nabla_x F^0_i(x, y) &= v^\intercal \partial_x F^0_i(x, y) + u^\intercal \partial_y F^0_i(x, y) = 0. \label{eq.vjp1}
\end{align}
Second and higher-order vector-Jacobian products can be obtained by a recursive application of \Cref{eq.vjp1}. More precisely, let the $a$-th order vector-Jacobian product be $u^a$. If $u^a$ is implicitly defined via the (linear) equality constraint $F_i^a (x, u^a) = 0$ then, the $(a+1)$-th order vector-Jacobian product $u^{a+1} = v^\intercal_{a+1} \partial_x u^a$ is given by $F_i^{a+1} (x, u^{a+1}) = v_{a + 1}^\intercal \partial_x F_i^a (x, u) + [u^{a+1}]^\intercal \partial_u
F_i^a (x, u) = 0$.

\subsection{The case of \texorpdfstring{$S^n$}{Sn}}\label{sec.case_of_Sn}

The Hessian of the squared distance, as well as its eigenvalues, is computed in \cite{ferreira2006hessian, pennec2017hessian}. Using this, we will offer practical choices for $\lambda$ and $\eta$.

On the unit sphere $S^n$, according to \cite[Eq.~$3$]{ferreira2006hessian} or \cite[Section~$3,1$]{pennec2017hessian}, the Hessian $H_c$ of $\half d(x, y)^2$ with respect to $y$ has one eigenvalue $1$ and all other eigenvalues equal to $\ctg(t)$, where $t$ is the distance between $x$ and $y$, and $\ctg(t) = t\cot{t}$ is a strictly decreasing function on $[0, \pi / 2]$ with $\ctg(0) = 1$ and $\ctg(\pi/2) = 0$.

Assume a $\lambda$ has been chosen. What is the largest value of $\eta$ we can choose? From the proof of Lemma~\ref{lem:phi-is-varphi}, we need that $||H_c - \Id|| < 1 - \eta$. On $S^n$, we know from the eigenvalues of $H_c$ that $||H_c - \Id|| = 1 - \ctg(t)$ if $||y|| = t$. So, inside $B_x(\lambda)$, the maximum value of $||H_c - \Id||$ is $1 - \ctg(\lambda)$ and we can choose $\eta$ at most $\ctg(\lambda)$. We will use this result in our experiment in Section~\ref{sec.multi_modal_spheres}.

\section{Optimisation and Experiments}

In each experiment we trained the model to approximate a given target density $p(x) \propto e^{-u(x)}$ for some energy function $u(x)$ defined either on a torus $\mathbb{T}^2$ or on the sphere $S^2$. The model used for the experiments is either an IRCPM or a stack of IRCPMs as specified in each experiment. 

The function $\psi_{\theta}(x)$ is parametrised with a two-layer MLP with softplus non-linearities with layer sizes [32, 32, 1] for the experiments on the torus and layer sizes [128, 128, 1] for the experiments on the sphere. The parameters $\theta$ of $\psi_{\theta}(x)$ are optimised using the Adam optimiser \cite{kingma2014adam} to minimise the KL-divergence between the density defined by the model $q_{\theta}$ and the target $p$,
\begin{align}
   \text{KL}(q_{\theta}; p) &= \mathbb{E}_{\pi(x)}[\ln \pi(x) - \ln |\det E_{f_{\theta}(x)} J_{f_{\theta}}(x) E_{x}^\intercal| + u(f_{\theta}(x))] + \ln Z,
\end{align}
where $Z = \int_{\mathcal{M}} d\mu(x) e^{-u(x)}$ is the normaliser of the target density.
The gradients of the term $\ln |\det E_{f_{\theta}(x)} J_{f_{\theta}}(x) E_{x}^\intercal|$ with respect to $\theta$ are estimated using the stochastic estimator from \cite{dong2017scalable, huang2020convex} unless stated otherwise. For all experiments we report estimated values of $\text{KL}(q_{\theta}; p)$ and the effective sample size (ESS), \cite{doucet2001introduction, liu2001monte}, using $N=20000$ samples from the model.

\subsection{Targets on \texorpdfstring{$\mathbb{T}^2$}{T2}} \label{sec.multi_modal_tori}

We tested IRCPMs on 3 target densities on the torus $\mathcal{M} = \mathbb{T}^2$. Their expressions are detailed in \Cref{table.psi}. These targets test the ability of our model to capture modes as well as to incorporate symmetries. For the experiment with symmetries the target has a continuous symmetry group $(x_1, x_2) \rightarrow (x_1 + c, x_2 + c)$ for some arbitrary angle $c$. The symmetry was exactly incorporated into the model by using invariant features $h = x_1 - x_2$ as input to the function $\psi_{\theta}$, this architecture is referred to as "Symmetric MLP" in \Cref{table.psi}.
The results are shown in \Cref{fig.torus.modes} and \Cref{table.psi}. In all cases the model reaches ESS~$\ge 94\%$. We observed, as suggested by the analysis in \Cref{sec.modes.psi}, that the learned $\psi$ function has similar mode structure as the target energy. 

In \Cref{table.psi} we compared a free form $\psi$ (parametrised by a neural network) with different parametrisations inspired by \Cref{sec.modes.psi} which use the target energy $u(x)$ to construct $\psi$ in a way that leverages its local structure. The experiments suggest that combining learned components with the target energy while preserving its mode structure allows the model to reach competitive ESS while using a {\it much smaller} number of parameters, which is a very desirable property.

\begin{table}[t!]
\centering
\begin{tabular}{@{}cccc@{}}
\toprule
\textbf{Target}                              & \textbf{$\psi$ architecture}                                          & \textbf{\# parameters}             & \textbf{ESS}                  \\ \midrule
\multicolumn{1}{|c|}{\multirow{4}{*}{$u_1(x) = \sin(\pi (x_1 - x_2))$}} & \multicolumn{1}{c|}{MLP}                                & \multicolumn{1}{c|}{1249} & \multicolumn{1}{c|}{$84.2\%$} \\ \cmidrule(l){2-4} 
\multicolumn{1}{|c|}{}                       & \multicolumn{1}{c|}{Symmetric MLP}                      & \multicolumn{1}{c|}{1249} & \multicolumn{1}{c|}{$94.4\%$} \\ \cmidrule(l){2-4} 
\multicolumn{1}{|c|}{}                       & \multicolumn{1}{c|}{$\alpha u$}                         & \multicolumn{1}{c|}{1}    & \multicolumn{1}{c|}{$97.8\%$} \\ \cmidrule(l){2-4} 
\multicolumn{1}{|c|}{}                       & \multicolumn{1}{c|}{$\alpha u + \nabla u^\intercal M \nabla u$} & \multicolumn{1}{c|}{4}    & \multicolumn{1}{c|}{$98.6\%$} \\ \midrule
\multicolumn{1}{|c|}{\multirow{3}{*}{$u_2$ as in \cite[Table 2, multi-modal]{rezende2020normalizing}}} & \multicolumn{1}{c|}{MLP}                                & \multicolumn{1}{c|}{1249} & \multicolumn{1}{c|}{$99.9\%$} \\ \cmidrule(l){2-4} 
\multicolumn{1}{|c|}{}                       & \multicolumn{1}{c|}{$\alpha u$}                         & \multicolumn{1}{c|}{1}    & \multicolumn{1}{c|}{$95.6\%$} \\ \cmidrule(l){2-4} 
\multicolumn{1}{|c|}{}                       & \multicolumn{1}{c|}{$\alpha u + \nabla u^\intercal M \nabla u$} & \multicolumn{1}{c|}{4}    & \multicolumn{1}{c|}{$96.9\%$} \\ \midrule
\multicolumn{1}{|c|}{\multirow{3}{*}{$u_3(x) = 1.4 \sin(2 \pi x_1/3) \sin(2 \pi x_2/3)$}} & \multicolumn{1}{c|}{MLP}                                & \multicolumn{1}{c|}{1249} & \multicolumn{1}{c|}{$98.2\%$} \\ \cmidrule(l){2-4} 
\multicolumn{1}{|c|}{}                       & \multicolumn{1}{c|}{$\alpha u$}                         & \multicolumn{1}{c|}{1}    & \multicolumn{1}{c|}{$98.4\%$} \\ \cmidrule(l){2-4} 
\multicolumn{1}{|c|}{}                       & \multicolumn{1}{c|}{$\alpha u + \nabla u^\intercal M \nabla u$} & \multicolumn{1}{c|}{4}    & \multicolumn{1}{c|}{$99.3\%$} \\ \bottomrule
\end{tabular}\caption{Comparison of different parametrisations of $\psi$ inspired by the analysis of \Cref{sec.modes.psi} on the torus $\mathbb{T}^2$. For targets with a continuous symmetry (top row), free-form models under perform considerably relative to models which incorporate the symmetry exactly. Architectures that incorporate the mode structure of the target can reach comparable or better ESS (higher is better) with substantially fewer parameters.}\label{table.psi}
\end{table}

\begin{figure}[t!]
\centering
\includegraphics[width=\textwidth]{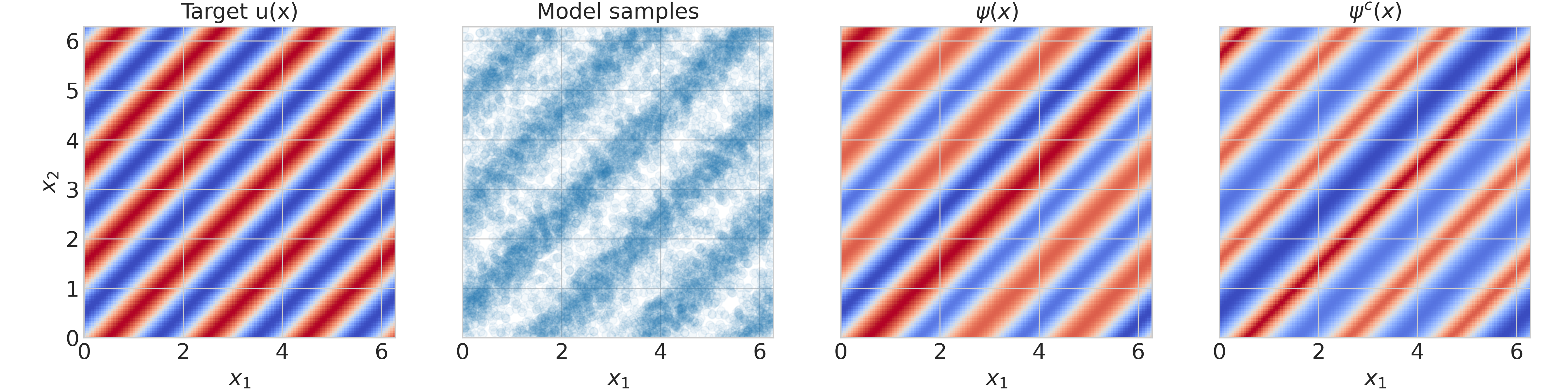}
\includegraphics[width=\textwidth]{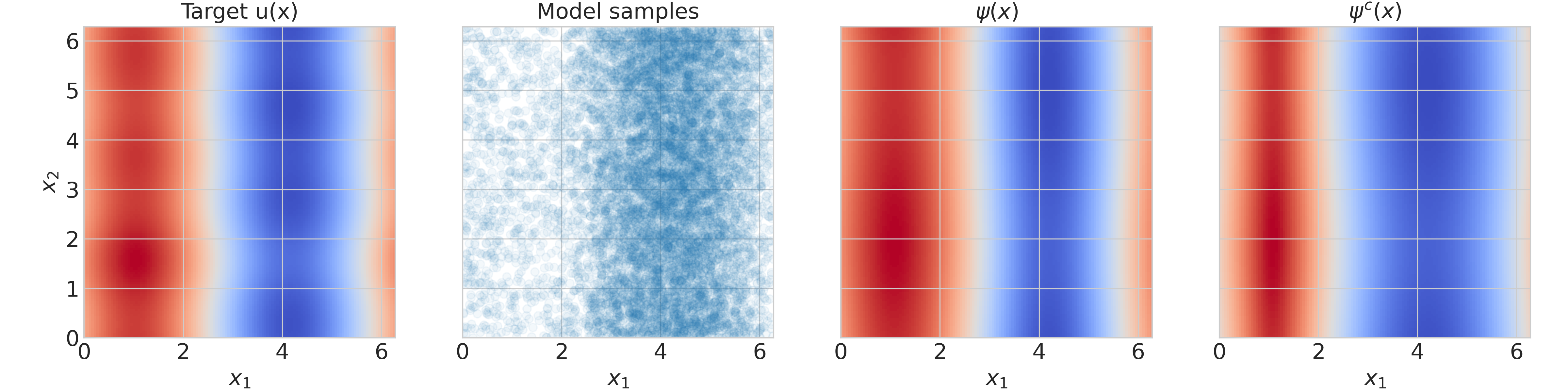}
\includegraphics[width=\textwidth]{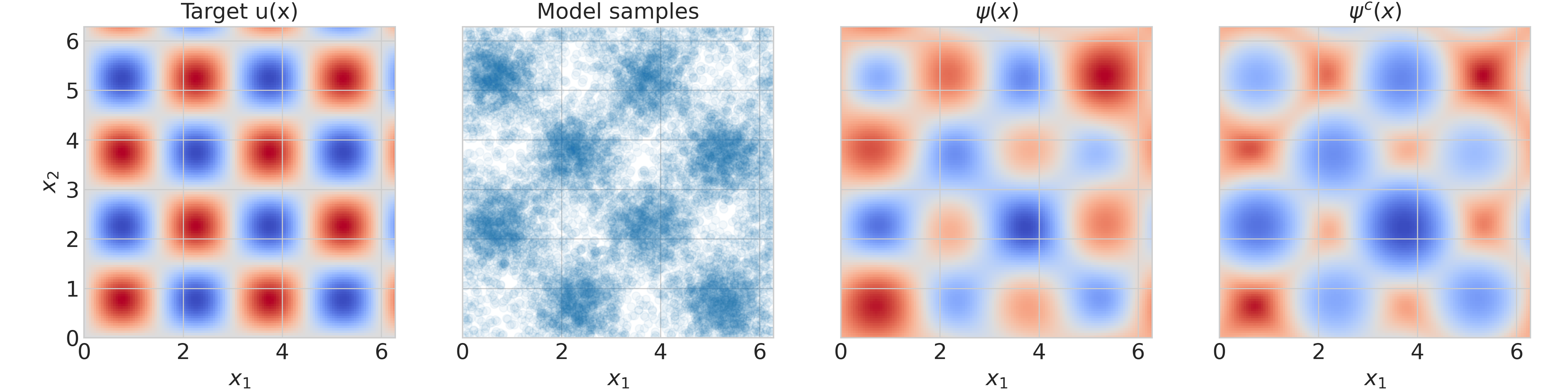}
\caption{Target densities on the torus $\mathbb{T}^2$. Columns represent, from left to right, the target energy $u(x)$, samples from the learned model, the learned function $\psi$ and its dual $\psi^c$.
Below, plot rows are counted from top to bottom. The target expressions and quantitative results are detailed in \Cref{table.psi}.
{\bf First row:} Target $u1$, multi-modal with continuous symmetry group $U(1)$.
{\bf Second row:} Target $u2$ with 3 modes.
{\bf Third row:} Target $u3$ with 8 modes.
As suggested from \Cref{sec.modes.psi}, the learned functions $\psi$ and its cost-concave dual $\psi^c$ have the same mode structure (extreme points) as the target energy $u(x)$, but with different second-order structure (Hessian) around the modes.}
\label{fig.torus.modes}
\end{figure}

\subsection{Multi-modal target on \texorpdfstring{$\mathbb{S}^2$}{S2}}
\label{sec.multi_modal_spheres}

We tested our setup on the sphere $S^2$ by choosing a target density from \cite{rezende2020normalizing}, which was also used in \cite{cohen2021riemannian}. The target is a mixture of the form
\begin{equation}
    p(x) \propto \sum_{k=1}{4}e^{10x^\intercal T_{s\rightarrow e}\mu_k}
\end{equation}
where $\mu_1=(0.7, 1.5)$, $\mu_2=(-1, 1)$, $\mu_3=(0.6, 0.5)$, $\mu_4=(-0.7, 4)$, $T_{s\rightarrow e}$ maps from spherical to Euclidean coordinates, and $x\in\mathbb{R}^3$ is a point on the embedded sphere in Euclidean coordinates.

Our model consists of a stack of three IRCPMs with shared parameters, as we found that stacking the same transformation lead to more stable training. The neural network $\psi$ is an MLP that takes the euclidean coordinates of points in $S^2\subset \mathbb{R}^3$ as inputs. The output of the MLP is divided by $20$, which is equivalent to initialising the weights of the last layer of the MLP closer to $0$. We also found this to help with training stability and final performance. Finally, we computed the logarithmic determinant of the Jacobian exactly inside the KL, and did not need stochastic estimators of the gradients.

The performance of our model is on par with the RCPM model of \cite{cohen2021riemannian}, with an ESS above $99\%$ and a KL of around $0.003$ nats. Figure~\ref{fig.sphere_psi} shows the learnt $\psi$. Note how the modes of $\psi$ are placed nearby the modes of the target density in Fig.~\ref{fig.sphere_four_modes_density}.

We checked that the bounds derived in Sections~\ref{sec.case_of_Sn} are satisfied by our trained model. For this purpose, we sampled a million points uniformly at random on $S^2$, and computed the norm of the gradient of $\psi$ as well as the operator norm of the Hessian $H_\psi$. We observed $\max_{S^2}||\grad \psi|| < 0.24$, and $\max_{S^2}||H_\psi|| < 0.84$. Since $\ctg(0.24)=0.981 > 0.84$, our model sits comfortably within the bounds derived in Section~\ref{sec.case_of_Sn} and we conclude that it is safe to use gradient descent to find $\inf_y h(x, y)$.

Given the success of simpler models $\alpha u$ and $\alpha u + \nabla u^\intercal M \nabla u$ on the torus, Table~\ref{table.psi}, we tried those same models on $S^2$. Despite the qualitative similarity between $\psi$ Fig~\ref{fig.sphere_psi} and the target energy Fig~\ref{fig.sphere_four_modes_energy} of our trained MLP model, we found the simpler models to be unstable and not competitive so more work is needed to take advantage of the similarities between $\psi$ and $u$.
Finally, we trained a model with a single or only $2$ stacked transformations instead of $3$ as above. We found the models with stack height $2$ or $3$ to be qualitatively equivalent: scatter plots, as well as ESS and KL were indistinguishable. The only difference was that with only $2$ stacked transformations, we observed $\max ||\grad \psi|| = 0.365$ and $\max ||H_\psi|| = 1.11$. These values are too high and do not satisfy the bounds of Section~\ref{sec.case_of_Sn}. This suggests that the bounds of that section might be too restrictive, and future work could improve on them. With a single transformation, the values of $\max ||\grad \psi||$ measured during training were even higher. Training of the model was unstable, with only some seeds reaching high ESS, while for other the ESS initially rose before collapsing.

\begin{figure}[t!]
\centering
\begin{subfigure}{.23\textwidth}
  \centering
  \includegraphics[width=3cm]{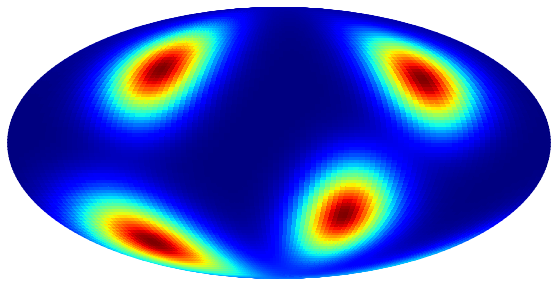}
  \caption{}
  \label{fig.sphere_four_modes_density}
\end{subfigure}%
\begin{subfigure}{.23\textwidth}
  \centering
  \includegraphics[width=3cm]{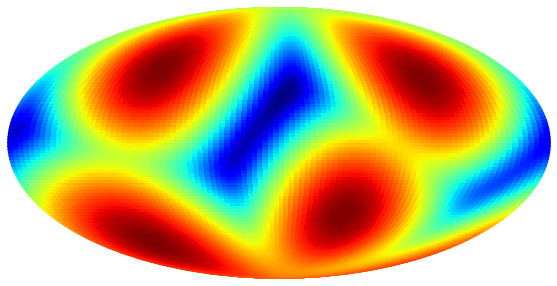}
  \caption{}
  \label{fig.sphere_four_modes_energy}
\end{subfigure}%
\begin{subfigure}{.23\textwidth}
  \centering
  \includegraphics[width=3cm]{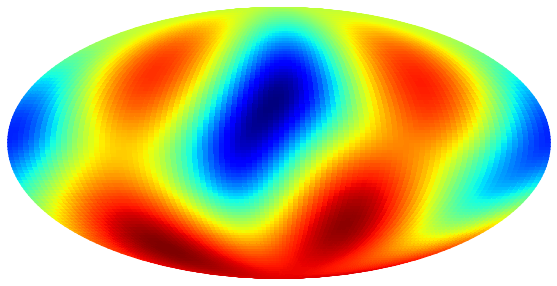}
  \caption{}
  \label{fig.sphere_psi}
\end{subfigure}%
\begin{subfigure}{.23\textwidth}
  \centering
  \includegraphics[width=3cm]{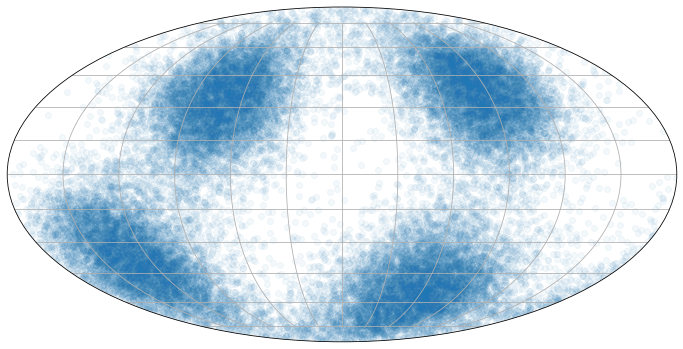}
  \caption{}
  \label{fig.sphere_model_scatter}
\end{subfigure}%
\caption{Target density \ref{fig.sphere_four_modes_density} with $4$ modes on the sphere $S^2$. The modes of $-\psi$ \ref{fig.sphere_psi} roughly match those of the target log density \ref{fig.sphere_four_modes_energy}. Samples of the learnt model ($\text{ESS}=99.5\%$, $\text{KL}=0.003$) are shown in \ref{fig.sphere_model_scatter}.}
\end{figure}

\section{Discussion}

We introduced {\it Implicit Riemannian Concave Potential Maps}, IRCPMs. A class of normalising flow models that uses implicit layers (that is, layers defined via a minimum problem) to define cost-concave potentials for exponential map flows. It extends prior work \cite{cohen2021riemannian} by considering a more general class of cost-concave functions, which allow for easy incorporation of known symmetries of the target density. We provided an initial theoretical analysis of the proprieties of these flows and lay-ed out conditions for stable training. In particular, the relationship between $\psi$ and the target energy is very intriguing. If we can better understand this relationship, perhaps we'll be able to come up with more effective parametrisation of $\psi$.

Similarly to other recent work on flows on Riemannian manifolds \cite{cohen2021riemannian, katsman2021equivariant}, the model definition is agnostic to the underlying manifold structure, but its practical implementation requires the ability to efficiently compute the exponential map, covariant derivatives and geodesic distances. 

IRCPMs have no explicit formula for the log-det-Jacobian terms. It therefore relies on stochastic estimators for scalable training \cite{dong2017scalable, huang2020convex} but evaluation of such models in high-dimensional settings based on exact likelihoods is still a challenge. This challenge is common to all flow models without explicit log-det-Jacobian \cite{papamakarios2019normalizing, grathwohl2018ffjord, cohen2021riemannian, katsman2021equivariant, mathieu2020riemannian} and not specific to our proposed model.

The presented experiments are only to provide an illustration and proof-of-concept of different aspects of IRCPMs. Since the target manifolds are low-dimensional and the neural networks used are relatively small, we used a naive backpropagation through the minimisation in \Cref{eq.nnet.phi} for optimisation. However we emphasise that a much more scalable training can be performed using the implicit higher-order gradients from \Cref{sec.higher_order_grads} (both in terms of memory and compute).

\bibliographystyle{alpha}
\bibliography{main}

% \pagebreak
% \appendix

% \section{Appendix}

\end{document}